\title{\LARGE \bf
Robustifying Binary Classification to Adversarial Perturbation
}
\author{Fariborz Salehi and Babak Hassibi
\thanks{F. Salehi and B. Hassibi are with the Department of Electrical Engineering, California Institute of Technology, Pasadena, CA 91125, USA. \{fsalehi, hassibi\}@caltech.edu}
}
\newtheorem{lem}{Lemma}
\newtheorem{theorem}{Theorem}
\newtheorem{defn}{Definition}
\newtheorem{assump}{Assumption}
\newtheorem{rem}{Remark}
\newtheorem{cor}{Corollary}
\newcommand{\w}{\mathbf{w}}
\renewcommand{\v}{\mathbf{v}}
\renewcommand{\u}{\mathbf{u}}
\newcommand{\x}{\mathbf{x}}
\newcommand{\R}{\mathbb{R}}
\newcommand{\lnorm}[1]{\left\Vert {#1} \right\Vert}
\begin{document}

\maketitle
\thispagestyle{empty}
\pagestyle{empty}

\begin{abstract}

Despite the enormous success of machine learning models in various applications, most of these models lack resilience to (even small) perturbations in their input data. Hence, new methods to robustify machine learning models seem very essential. To this end, in this paper we consider the problem of binary classification with adversarial perturbations. Investigating the solution to a min-max optimization (which considers the worst-case loss in the presence of adversarial perturbations) we introduce a generalization to the max-margin classifier which takes into account the power of the adversary in manipulating the data. We refer to this classifier as the "Robust Max-margin" (RM) classifier. Under some mild assumptions on the loss function, we theoretically show that the gradient descent iterates (with sufficiently small step size) converge to the RM classifier in its direction. Therefore, the RM classifier can be studied to compute various performance measures (e.g. generalization error) of binary classification with adversarial perturbations.

\end{abstract}

\section{Introduction}
\label{sec:intro}
Machine learning models have been very successful in many applications,  ranging from spam detection, speech and visual recognition, to the analysis of genome sequencing and financial markets. Yet, despite this indisputable success, it has been observed that commonly used machine learning models (such as deep neural networks) are very instable in the presence of non-random perturbations~\cite{szegedy2013intriguing, biggio2013evasion, carlini2017towards}.\\
The instability of machine learning models is a fundamental issue that needs to be addressed, especially when such models are used in sensitive applications such as autonomous systems. There have been many recent efforts to address this issue (a partial list of papers includes~\cite{madry2017towards, xu2017feature, shafahi2019adversarial}.) However, robustness comes at a cost and it is often the case that the adversarial training algorithms underperform on the clean data when compared with their (non-robust) counterparts. Understanding the tradeoffs (in accuracy) between the robust and standard models is an important problem the answer to which can help us find more efficient training methods. Recently, Javanmard et. al.~\cite{javanmard2020precise}  precisely characterized the tradeoff between standard and adversarial risks for the linear regression problem. They also analyze the performance of the resulting model under i.i.d. Gaussian training data.\\
In this paper, we study the simple (yet fundamental) problem of binary classification where the goal is to find a classifier that has a high accuracy in predicting the binary labels when having feature vectors as its input. When the clean data is available, max-margin classifier~\cite{vapnik1982estimation} is the model of choice as maximizing the margin is interpreted as minimizing the risk of misclassification~\cite{cortes1995support}. Recently, it was shownin ~\cite{soudry2018implicit} that for a broad class of loss functions, including the well-known logistic loss, the gradient descent iterates converge to the max-margin classifier. More recently, the asymptotic performance of this classifier has been characterized in~\cite{montanari2019generalization, deng2019model, salehi2020gmm}. \\
We consider the case where the training data is perturbed by an adversary and introduce the "Robust Max-margin" (RM) classifier as a generalization of max-margin to perturbed input data. We then consider the adversarial training method, in which the optimal parameter is a solution to a saddle-point optimization. We show that the gradient descent algorithm with properly-tuned step sizes converges in its direction to the RM classifier. A significant consequence of this result is that one can characterize various  performance measures (e.g. generalization error) of adversarial training in binary classification by analyzing the performance of the RM classifier.\\
To the extent of our knowledge, this is the first work that introduces the robust max-margin classifier and proves the convergence of gradient descent iterates to this classifier. This paper was originally submitted on March $2020$ to the Conference on Decision and Control (CDC). We should note that more recently in~\cite{javanmard2020binary}, the authors have shown similar results (referred to as the "robust separation") and analyze the performance of the resulting classifier under i.i.d. Gaussian training data. Their analysis on the performance of the resulting estimator is based on the Convex Gaussian Min-max Theorem~\cite{stojnic2013framework, thrampoulidis2015regularized}. Similar analyses have been recently provided for the performance of max-margin classifiers as well as other generalized linear models~\cite{salehi2019impact, montanari2019generalization, deng2019model, emami2020generalization, salehi2020gmm}.\\
The organization of the paper is as follows: In Section~\ref{sec:prelim} we provide some background on the binary classification problem and how it connects with the max-margin classifier. The mathematical setup for the problem of binary classification with perturbed training data is provided in Section~\ref{sec:setup}. The main result of the paper is presented in Section~\ref{sec:main},  and the proofs are provided in Sections~\ref{sec:proof_lemma} and~\ref{sec:proof_main}.

\section{Preliminaries}
\label{sec:prelim}
\subsection{Notations}
For any vector $\w\in \R^p$, the binary classifier associated with $\w$ is defined as: $C_{\w}:\mathbb R^p\rightarrow \{\pm 1\}$, such that $C_{\w}(\x) = \text{Sign}(\w^T\x)$. $\mathbb N$ denotes the set of non-negative integers. For a vector $\v$, $\v^T$ denotes its transpose, and $\lnorm{\v}_p$ (for $p\geq 1$) is its $\ell_p$ norm, where we often omit the subscript for $p=2$. $\sigma_{\max}(\mathbf M)$ denotes the maximum singular value of the matrix $\mathbf M$. $\mathbf 0_d$ and $\mathbf 1_d$ respectively represent the all-one and all-zero vectors in dimension $d$. A function $f(\cdot)$ is said to be $L$-smooth if its derivative, $f'(\cdot)$, is $L$-Lipschitz. 
\subsection{Background: binary classification with unperturbed data}
Let $\mathcal D=\{(\x_i, y_i): 1\leq i\leq n \}$ denote a set of data points, where for $i=1,\ldots,n$, $\x_i \in \mathbb R^p$ is the feature vector, and $y_i \in \{\pm1\}$ is the binary label. We assume that $\mathcal D$ is linearly separable, i.e., there exist $\w^\star \in \R^p$ such that:
\begin{equation}
    y_i = \text{Sign}(\x_i^T\w^\star)~,~\text{for } i=1,2,\ldots,n.
\end{equation}
When the training data has no perturbation, one can attempt to find a classifier by minimizing the empirical loss on dataset $\mathcal D$. In the setting of binary classification, the loss function is usually formed as,
\begin{equation}
    \mathcal L(\w) = \sum_{i=1}^{n} \ell(y_i\x_i^T\w)
\end{equation}
where the function $\ell(\cdot):\R\rightarrow \R_+$ is a decreasing function that approaches $0$ as its input approaches infinity. A typical approach to find the minimizer of the loss function $\mathcal L(\w)$ is through the iterative algorithms, such as the gradient descent (GD) algorithm. 
The convergence of the GD iterates on separable datasets has been studied in recent papers~\cite{ji2018risk, soudry2018implicit}, where it was shown, among others, that while the norm of the iterates approaches infinity, their direction would approach to the direction of the well-known $L_2$ max-margin classifier defined as,
\begin{equation}
    \label{eq:max-margin}
    \begin{aligned}
    && \w_{M} = \arg \min_{\w \in \R^p}&~~\lnorm{\w}\\
    &&& \text{s.t.}~~y_i\x_i^T\w \geq 1~,~~1\leq i\leq n.
    \end{aligned}
\end{equation}
In other words, their result states that for almost every $\x\in \R^p$, $C_{\w_t}(\x)\rightarrow C_{\w_{M}}(\x)$ as $t$ grows, where $\w_t$ denotes the result of GD after $t$ steps. The max-margin classifier~\eqref{eq:max-margin} (a.k.a. hard-margin SVM~\cite{cortes1995support}) has been extensively studied in the machine learning community. This classifier simply maximizes the smallest distance of the data points to the separating hyperplane (referred to as the margin).\\
The abovementioned result, i.e., convergence of the GD iterates to the max-margin classifier, has significant consequences as the max-margin classifier can then be studied to compute various performance measures (such as the generalization error) of the resulting estimator. Very recently, researchers have exploited this result to accurately compute the generalization error of GD over the logistic loss~\cite{montanari2019generalization}.  
\section{Binary classfication with adversarial perturbation}\label{sec:setup}
As explained earlier in Section~\ref{sec:intro}, understanding the behavior of machine learning models under perturbed input is very essential with the goal of improving the robustness of these models. Inspired by recent advances in understanding the behavior of machine learning models under adversarial perturbation, here we study the problem of binary classification with perturbed data.\\
We assume that the training data is a perturbed version of the underlying dataset, $\mathcal D$.
Let $\mathcal D'=\{(\x_i+\mathbf z_i, y_i): 1\leq i\leq n\}$ denote the set of training data, where, for $i=1,2,\ldots,n$, $\mathbf z_i\in \mathcal S_i$ is the unknown perturbation, and the set $\mathcal S_i$ consists of all the allowed perturbation vector. In the adversarial setting it is often assumed that the perturbation vectors, $\{\mathbf z_i\}_{i=1}^{n}$, are chosen in such a way that the training algorithm is beguiled into generating a wrong solution. \\
Throughout this paper, we assume that the perturbation vectors have bounded norms by defining $\mathcal S_i = \epsilon_i \mathcal B_p$, where $\mathcal B_p$ denotes the unit ball in $\mathbb R^p$, and $\epsilon_i\geq 0$, for $1\leq i\leq n$, indicates the maximum allowed norm for the $i$-th perturbation vector, $\mathbf z_i$. While the perturbation vectors are hidden to us, we assume having knowledge of $\{\epsilon_i\}_{i=1}^{n}$. \\
Note that the set of allowed perturbations can be different for different data points. This includes certain special cases such as: (1) only a subset of the data is perturbed ($\epsilon_i = 0$ if the $i$-th data point is not perturbed), and (2) all the data points have the same perturbation set, i.e., for some $\epsilon\geq 0$ we have $\epsilon_i = \epsilon$ for $1\leq i\leq n$, .
\subsection{Saddle-point optimization}
\label{sec:saddle_point_opt}
The parameters of the desired model are often derived by forming a loss function and solving an optimization problem to find a minimizer of the loss. In adversarial training, one should also consider the manipulative power of the adversary where the adversary attempts to misguide the training algorithm. When the goal of a training algorithm is to minimize a loss function, one can view the adversary as an entity which attempts to maximize the loss. The following  $\min$-$\max$ optimization problem incorporates the contrary behaviors of the adversary and the training algorithm with respect to the loss function.
\begin{equation}
    \label{eq:min-max optimization}
    \min_{\w \in \mathbb R^p}~\max_{\mathbf z_i \in \mathcal S_i, 1\leq i\leq n}~\mathcal L(\w):= \sum_{i=1}^{n}\ell\big(y_i(\x_i+\mathbf z_i)^T\w\big).
\end{equation}
In order to find a robust model, we should solve this saddle-point optimization. Under our assumptions on the perturbation sets, we can introduce the function $\mathcal L_{\bm \epsilon}(\w)$ which is the result of the inner maximization in~\eqref{eq:min-max optimization}, i.e.,
\begin{equation}
\label{eq:robust_loss}
    \mathcal L_{\bm \epsilon}(\w) = \sum_{i=1}^{n} \max_{\lnorm{\mathbf z_i}\leq \epsilon_i}\ell\big(y_i(\x_i+\mathbf z_i)^T\w\big),
\end{equation}
where $\bm \epsilon = [\epsilon_1,\epsilon_2,\ldots,\epsilon_n]^T$. Therefore, the robust classifier is defined as a minimizer of $\mathcal L_{\bm \epsilon}(\w)$.

\section{Main Results}
\label{sec:main}
In this section, we present the main results of the paper. First, in Section~\ref{sec:Robust_max_margin} we introduce the {\bf{R}}obust {\bf{M}}ax-margin ({\bf{RM}}) classifier as an extension of the max-margin classifier when the training data is perturbed. Consequently, in Section~\ref{sec:GD_Iterates}, we show that, under some conditions on the function $\ell(\cdot)$, gradient descent algorithm (with sufficiently small step size) would converge in its direction to the RM classifier.  
\subsection{Robust Max-margin (RM) Classifier}
\label{sec:Robust_max_margin}
The max-margin classifier is a classifier that maximizes the minimum distance of the data points to the separating hyperplane (margin). In our setting where the training data is perturbed we should modify the notion of the margin to incorporate various perturbations across data points. More specifically, in order to get a robust classifier we would like the data points with higher perturbations to be farther away from the resulting separating hyperplane.

The {\bf{Robust Max-margin}} classifier is defined as, 
\begin{equation}
    \label{eq:robust_max-margin}
    \begin{aligned}
    &&& \w_{RM}^{(\bm\epsilon)} := \arg \min_{\w \in \R^p}~~\lnorm{\w}\\
    &&&~~~~~~~~~~\quad \text{s.t.}~~y_i\x_i^T\w \geq 1+\epsilon_i\lnorm{\w}~,~~1\leq i\leq n.
    \end{aligned}
\end{equation}
\begin{figure}[t]
      \centering
      \includegraphics[scale=0.22]{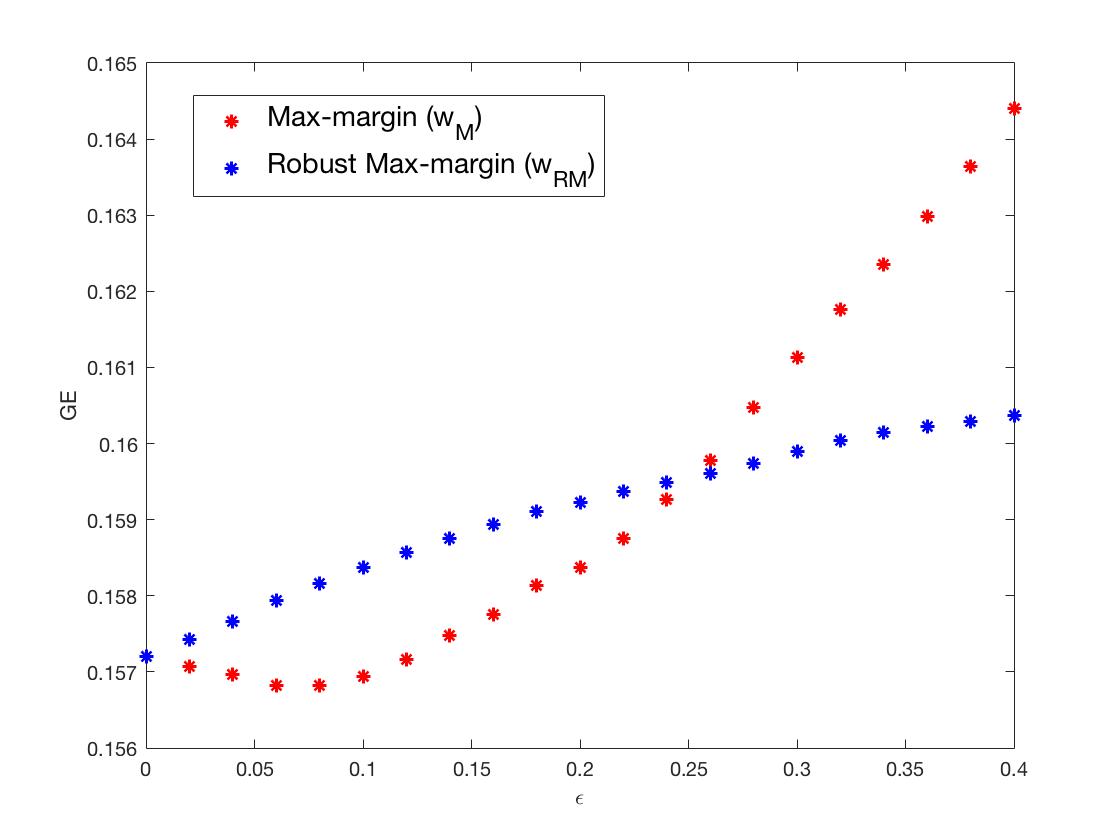}
      \caption{A comparison in generalization error (GE) between the max-margin~\eqref{eq:max-margin} and the robust max-margin~\eqref{eq:robust_max-margin}. The result is the average over $20$ independent trials with $n=100$ and $p=40$. The data is generated from a Gaussian distribution and $40\%$ of data points are perturbed with maximum norm of $\epsilon$. For large values of $\epsilon$, the RM classifier has a better generalization error than the max-margin classifier.}
      \label{fig:1}
\end{figure}
As observed in the constraints of this optimization, the RM classifier enforces data points with higher perturbations to keep a larger distance from the separating hyperplane $\{\x:\w_M^T\x = 0\}$. \\
When the data is perturbed, we expect the RM classifier to outperform the max-margin classifier. Figure~\ref{fig:1} depicts a comparison in generalization error between the max-margin and the RM classifier. Although for small perturbations, the two model behave the same, the RM classifier has a better performance as we increase the norm of perturbations.\\
While the separability of the data is necessary for the existence of the RM classifier, it is not sufficient. The following lemma provides a sufficient condition for its existence.
\begin{lem}
The RM classifier exists when the data set $\mathcal D=\{(\x_i,y_i):1\leq i \leq n\}$ is separable and,
\begin{equation}
    \lnorm{\bm\epsilon}_{\infty}<\frac{1}{\lnorm{\w_M}_2}~,
\end{equation}
where $\w_M$ is the max-margin classifier.
\end{lem}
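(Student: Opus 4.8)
The plan is to split the lemma into two independent pieces: first, that the feasible set of~\eqref{eq:robust_max-margin} is nonempty, and second, that on a nonempty feasible set the minimum of $\lnorm{\w}$ is actually attained. The second piece is routine and I would dispatch it quickly: each constraint $y_i\x_i^T\w - \epsilon_i\lnorm{\w} \geq 1$ cuts out a closed subset of $\R^p$ (its left-hand side is continuous in $\w$), so the feasible set $\mathcal C$ is closed; it omits the origin (since $0\geq 1$ fails); and $\lnorm{\w}$ is coercive. Passing to a bounded minimizing sequence and using closedness of $\mathcal C$ then produces a minimizer. I would also note in passing that $\w\mapsto y_i\x_i^T\w-\epsilon_i\lnorm{\w}$ is concave, so $\mathcal C$ is convex and the minimizer is in fact unique — not needed for the statement, but it justifies the phrase ``the RM classifier''.

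The real content is the nonemptiness, and here the idea is to build an explicit feasible point by rescaling the max-margin classifier $\w_M$. Separability of $\mathcal D$ guarantees $\w_M$ exists with $y_i\x_i^T\w_M\geq 1$ for every $i$. I would then test the ray $\w = c\,\w_M$ with $c>0$: the $i$-th constraint becomes $c\big(y_i\x_i^T\w_M - \epsilon_i\lnorm{\w_M}\big)\geq 1$, and since $y_i\x_i^T\w_M\geq 1$ while $\epsilon_i\lnorm{\w_M}\leq\lnorm{\bm\epsilon}_\infty\lnorm{\w_M}$, the parenthesized quantity is at least $1-\lnorm{\bm\epsilon}_\infty\lnorm{\w_M}_2$. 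This is strictly positive precisely under the hypothesis $\lnorm{\bm\epsilon}_\infty<1/\lnorm{\w_M}_2$, so taking $c = \big(1-\lnorm{\bm\epsilon}_\infty\lnorm{\w_M}_2\big)^{-1}$ makes $c\,\w_M$ feasible, and hence $\mathcal C\neq\emptyset$.

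I do not expect a genuine obstacle, but the one observation the argument hinges on is that the constraints in~\eqref{eq:robust_max-margin} are positively homogeneous in $\w$ — scaling $\w$ by $c>0$ multiplies both $y_i\x_i^T\w$ and $\epsilon_i\lnorm{\w}$ by $c$ — so restricting attention to the ray through $\w_M$ collapses each vector constraint to a single scalar inequality. Choosing $\w_M$ as the test direction is natural because it has the most margin to spare relative to $\lnorm{\cdot}$, which is exactly what makes the threshold $1/\lnorm{\w_M}$ surface; I would expect this bound to be essentially sharp, in the sense that it is the condition guaranteeing some positive multiple of $\w_M$ remains feasible.
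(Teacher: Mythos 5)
Your proof is correct and follows essentially the same route as the paper: the explicit feasible point $c\,\w_M$ with $c=\bigl(1-\lnorm{\bm\epsilon}_\infty\lnorm{\w_M}_2\bigr)^{-1}$ is exactly the point $\bar\w$ the paper exhibits. Your additional verification that the minimum is attained (closedness, coercivity, convexity of the feasible set) is a welcome piece of rigor that the paper leaves implicit, but it does not change the underlying argument.
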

\begin{proof}
The max-margin classifer, $\w_M$, exists when $\mathcal D$ is linearly separable. Also, $\bar \w = \frac{1}{1-\lnorm{\bm\epsilon}_{\infty}\lnorm{\w_M}_2}\w_M $ is a feasible point of the optimization~\eqref{eq:robust_max-margin}. Therefore, the RM classifier exists and $\lnorm{\w_M}\leq \lnorm{\w_{RM}}\leq \lnorm{\bar \w}$.
\end{proof}
When the perturbation sets are the same for different data points, one expects the RM classifier to be the same as the max-margin classifier. 
\begin{lem}
If $\bm \epsilon=\epsilon \times \mathbf 1_{n}$ for some $\epsilon\geq0$, the RM classifier exists if and only if $\epsilon<\frac{1}{\lnorm{\w_M}}$. In this case, 
\begin{equation}
    \w_{RM} = \frac{\w_M}{1-\epsilon \lnorm{\w_M}}.
\end{equation}
\end{lem}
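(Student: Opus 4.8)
The plan is to reduce everything to a single rescaling that identifies feasible points of the RM program with feasible points of the plain max-margin program~\eqref{eq:max-margin}. I would first observe that, for $\bm\epsilon = \epsilon\mathbf 1_n$, each RM constraint reads $y_i\x_i^T\w - \epsilon\lnorm{\w}\geq 1$; the left-hand side is a concave function of $\w$ (linear minus a norm), so the RM feasible set is an intersection of closed convex sets, hence closed and convex. Consequently, if this set is nonempty its minimum-$\ell_2$-norm element exists and is unique (it is the Euclidean projection of the origin). So ``the RM classifier exists'' is equivalent to ``the RM feasible set is nonempty,'' and to finish it suffices to produce one optimal point.

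The core step is the following inequality, which I would derive for an \emph{arbitrary} RM-feasible $\w$. From $y_i\x_i^T\w \geq 1 + \epsilon\lnorm{\w}$ for all $i$, dividing by $1+\epsilon\lnorm{\w}>0$ shows $\w/(1+\epsilon\lnorm{\w})$ is feasible for~\eqref{eq:max-margin}, so optimality of $\w_M$ gives $\lnorm{\w}/(1+\epsilon\lnorm{\w}) \geq \lnorm{\w_M}$, i.e. $\lnorm{\w}\,(1-\epsilon\lnorm{\w_M}) \geq \lnorm{\w_M}$. This one line settles the ``only if'' direction: if $\epsilon \geq 1/\lnorm{\w_M}$ the left side is $\leq 0$ while $\lnorm{\w_M}>0$ (note $\w_M\neq\mathbf 0$, since $\mathbf 0$ violates $y_i\x_i^T\w\geq 1$), a contradiction, so the RM feasible set is empty and the RM classifier does not exist. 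When $\epsilon < 1/\lnorm{\w_M}$ the same inequality rearranges to the uniform lower bound $\lnorm{\w}\geq \lnorm{\w_M}/(1-\epsilon\lnorm{\w_M})$ on every RM-feasible point.

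Finally I would exhibit a feasible point meeting this bound, namely $\bar\w := \w_M/(1-\epsilon\lnorm{\w_M})$ (the same point already used to prove the previous lemma, with $\lnorm{\bm\epsilon}_\infty=\epsilon$): a short check gives $y_i\x_i^T\bar\w = y_i\x_i^T\w_M/(1-\epsilon\lnorm{\w_M}) \geq 1/(1-\epsilon\lnorm{\w_M}) = 1+\epsilon\lnorm{\bar\w}$, so $\bar\w$ is RM-feasible, and $\lnorm{\bar\w}=\lnorm{\w_M}/(1-\epsilon\lnorm{\w_M})$ matches the lower bound; by uniqueness of the min-norm element, $\w_{RM}=\bar\w$. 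I do not anticipate a serious obstacle; the only places to be careful are checking the signs of $1+\epsilon\lnorm{\w}$ and $1-\epsilon\lnorm{\w_M}$ before dividing, and invoking uniqueness of the minimum-norm point so that the explicit $\bar\w$ is identified as \emph{the} RM classifier rather than merely \emph{an} optimal solution.
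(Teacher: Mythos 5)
Your proposal is correct and follows essentially the same route as the paper: rescaling an RM-feasible point $\w$ to the max-margin-feasible point $\w/(1+\epsilon\lnorm{\w})$ to get the necessary condition $\epsilon\lnorm{\w_M}<1$, and then exhibiting $\w_M/(1-\epsilon\lnorm{\w_M})$ as the optimum. You merely make explicit what the paper leaves as ``easy to check'' --- the uniform lower bound $\lnorm{\w}\geq \lnorm{\w_M}/(1-\epsilon\lnorm{\w_M})$ on all RM-feasible points and the uniqueness of the minimum-norm element of the closed convex feasible set --- which is a welcome tightening but not a different argument.
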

\begin{proof}
Assume $\w_{RM}$ exists, then we have $\bar\w = \frac{\w_{RM}}{1+\epsilon\lnorm{\w_{RM}}}$ satisfies the constraints in the optimization~\eqref{eq:max-margin}. Since $\w_M$ is the solution to this optimization, we have $\lnorm{\w_M}\leq \lnorm{\bar\w}$ which gives $\epsilon\cdot\lnorm{\w_M}<1$. It is easy to check that $\w = \frac{\w_M}{1-\epsilon \lnorm{\w_M}}$ is the solution to the optimization~\eqref{eq:robust_max-margin}, as it satisfies the constraints and $\w_M$ is the optimal value of the optimization program~\eqref{eq:max-margin}.
\end{proof}
\subsection{Convergence of GD Iterates}
\label{sec:GD_Iterates}
In this section, we present the main result of the paper that is the convergence of the gradient descent iterates to the RM classifier. As discussed earlier in Section~\ref{sec:saddle_point_opt}, the goal is to solve the following optimization problem.
\begin{equation}
    \min_{\w \in \R^p} \mathcal L_{\bm \epsilon}(\w),
\end{equation}
where $\mathcal L_{\bm \epsilon}(\cdot)$ is defined in~\eqref{eq:robust_loss}. Gradient descent (GD) is the common method of choice to find a minimizer of this optimization. Starting from an initialization, $\w_0\in \R^p$, the GD iterates are generated through the following update rule:
\begin{equation}
    \label{eq:GD_iterates}
    \w_{t+1} = \w_{t} - \eta\cdot \nabla\mathcal L_{\bm\epsilon} (\w_t),~\text{for}~t\in \mathbb N,
\end{equation}
where $\eta>0$ is the step size.

Our goal is to study the behavior of the GD iterates as $t$ grows large. For our analysis, we need some assumptions to hold for the loss function $\ell(\cdot)$.
\begin{assump}
\label{ass_1}
The function $\ell:\mathbb R\rightarrow \mathbb R_+$ is twice-differentiable, monotonically decreasing, and $\beta$-smooth.
\end{assump}
We note that the common choices of the loss function satisfy the conditions in Assumption~\ref{ass_1}. For instance, the logistic loss defined as $\ell(u) = \log\big(1+\exp(-u)\big)$ satisfies these conditions (with $\beta=1$.)
We first state the following lemma which provides some insights on the behavior of GD iterates, $\w_t$, as $t\rightarrow \infty$. 
\begin{lem}
\label{lem:lim_w}
Consider the gradient descent iterates~\eqref{eq:GD_iterates} with step size $\eta<2\cdot \beta^{-1}\cdot(\sigma_{\max}(\mathbf X)+\lnorm{\bm \epsilon})^{-2}$, where $\mathbf X=[\x_1,\x_2,\ldots,\x_n]^T\in \R^{n\times p}$ is the data matrix, $\mathcal L_{\bm \epsilon}$ is defined in~\eqref{eq:robust_loss}, and $\ell(\cdot)$ satisfies Assumption~\ref{ass_1}. If the RM classifier exists, then, as $t\rightarrow +\infty$ we have,
\begin{enumerate}[i.]
    \item $~\lnorm{\w_t} \rightarrow +\infty$,
    \item $~\nabla\mathcal L_{\bm \epsilon}(\w_t)\rightarrow \mathbf 0_p$ , and,
    \item  $~y_i\x_i^T\w_t - \epsilon_i \lnorm{\w_t} \rightarrow +\infty$, for $i=1,2,\ldots,n$.
\end{enumerate}

\end{lem}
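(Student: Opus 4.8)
The plan is to adapt the implicit-bias template (e.g., of \cite{soudry2018implicit}) to the robust loss. First I would put $\mathcal L_{\bm\epsilon}$ in closed form: since $\ell$ is decreasing, the inner maximum in \eqref{eq:robust_loss} is attained at $\mathbf z_i=-\epsilon_i y_i\w/\lnorm{\w}$, so $\mathcal L_{\bm\epsilon}(\w)=\sum_{i=1}^{n}\ell\big(q_i(\w)\big)$ with $q_i(\w):=y_i\x_i^T\w-\epsilon_i\lnorm{\w}$, and for $\w\neq\mathbf 0$,
\[\nabla\mathcal L_{\bm\epsilon}(\w)=\sum_{i=1}^{n}\ell'\big(q_i(\w)\big)\,\nabla q_i(\w),\qquad \nabla q_i(\w)=y_i\x_i-\epsilon_i\frac{\w}{\lnorm{\w}},\]
where $\ell'\le 0$ by Assumption~\ref{ass_1}. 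Two structural facts will be used repeatedly: each $q_i$ is concave and positively $1$-homogeneous (so $\langle\nabla q_i(\w),\w\rangle=q_i(\w)$), and the map $\w\mapsto \big(q_1(\w),\dots,q_n(\w)\big)$ is $\big(\sigma_{\max}(\mathbf X)+\lnorm{\bm\epsilon}\big)$-Lipschitz, since its linear part contributes at most $\sigma_{\max}(\mathbf X)\lnorm{\cdot}$ and its $\lnorm{\w}$-part at most $\lnorm{\bm\epsilon}\lnorm{\cdot}$ (triangle inequality).

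For part (ii) I would establish a descent inequality. Using $\beta$-smoothness of $\ell$ together with the Lipschitz bound above one gets
\[\mathcal L_{\bm\epsilon}(\w_{t+1})\le\mathcal L_{\bm\epsilon}(\w_t)-\eta\Big(1-\tfrac{\eta\beta}{2}\big(\sigma_{\max}(\mathbf X)+\lnorm{\bm\epsilon}\big)^2\Big)\lnorm{\nabla\mathcal L_{\bm\epsilon}(\w_t)}^2,\]
and the step-size hypothesis makes the bracket strictly positive. Since $\mathcal L_{\bm\epsilon}\ge 0$, telescoping forces $\sum_{t\ge 0}\lnorm{\nabla\mathcal L_{\bm\epsilon}(\w_t)}^2<\infty$, hence $\nabla\mathcal L_{\bm\epsilon}(\w_t)\to\mathbf 0_p$; moreover $\mathcal L_{\bm\epsilon}(\w_t)$ is non-increasing and bounded below, so it converges.

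For parts (iii) and (i) I would use feasibility of the RM classifier in \eqref{eq:robust_max-margin}. For any $\w\neq\mathbf 0$, Cauchy--Schwarz and the constraints give
\[\big\langle\nabla q_i(\w),\w_{RM}\big\rangle=y_i\x_i^T\w_{RM}-\epsilon_i\frac{\langle\w,\w_{RM}\rangle}{\lnorm{\w}}\ge\big(1+\epsilon_i\lnorm{\w_{RM}}\big)-\epsilon_i\lnorm{\w_{RM}}=1,\]
so, since $-\ell'\ge 0$, we obtain $-\langle\nabla\mathcal L_{\bm\epsilon}(\w),\w_{RM}\rangle=\sum_i|\ell'(q_i(\w))|\langle\nabla q_i(\w),\w_{RM}\rangle\ge\sum_i|\ell'(q_i(\w))|$, i.e. $\sum_i|\ell'(q_i(\w))|\le\lnorm{\w_{RM}}\,\lnorm{\nabla\mathcal L_{\bm\epsilon}(\w)}$. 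Combining with (ii), $|\ell'(q_i(\w_t))|\to 0$ for every $i$; since $\ell$ is monotonically decreasing and $\beta$-smooth (concretely, for the logistic loss $-\ell'$ is positive and strictly decreasing, hence bounded below on any half-line $(-\infty,M]$), this forces $q_i(\w_t)\to+\infty$, which is (iii) — and then $\mathcal L_{\bm\epsilon}(\w_t)=\sum_i\ell(q_i(\w_t))\to 0$. Finally, feasibility of $\w_{RM}$ forces $\lnorm{\x_i}>\epsilon_i$ (from $\lnorm{\x_i}\lnorm{\w_{RM}}\ge y_i\x_i^T\w_{RM}\ge 1+\epsilon_i\lnorm{\w_{RM}}$), and $q_i(\w_t)\le\big(\lnorm{\x_i}-\epsilon_i\big)\lnorm{\w_t}$, so $\lnorm{\w_t}\ge q_i(\w_t)/(\lnorm{\x_i}-\epsilon_i)\to+\infty$, which is (i).

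I expect the main obstacle to be the descent step: $\mathcal L_{\bm\epsilon}$ is not globally smooth, because $\lnorm{\w}$ is non-differentiable at $\mathbf 0$ and has curvature of order $1/\lnorm{\w}$ near it; indeed the naive composition estimate leaves an extra nonnegative term of order $\lnorm{\w_{t+1}-\w_t}^2/\lnorm{\w_t}$ arising from the concavity of the $q_i$. The remedy is to show the iterates stay bounded away from $\mathbf 0$ — e.g., if $\mathcal L_{\bm\epsilon}(\w_0)<n\,\ell(0)=\mathcal L_{\bm\epsilon}(\mathbf 0)$ then, $\mathcal L_{\bm\epsilon}$ being continuous and (for small $\eta$) non-increasing along the iterates, they stay in a sublevel set that excludes a neighborhood of $\mathbf 0$, on which $\mathcal L_{\bm\epsilon}$ is genuinely $L$-smooth — and to carry this out together with the descent step as a single induction; one must also rule out the degenerate event $\w_t=\mathbf 0$, which is why $\w_0\neq\mathbf 0$ and a small step size are needed.
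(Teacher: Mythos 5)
Your reduction of $\mathcal L_{\bm\epsilon}$ to $\sum_i\ell\big(q_i(\w)\big)$ with $q_i(\w)=y_i\x_i^T\w-\epsilon_i\lnorm{\w}$, and your key inequality $\langle\nabla q_i(\w),\w_{RM}\rangle\ge 1$ (Cauchy--Schwarz plus feasibility of $\w_{RM}$), are exactly the ingredients the paper uses. But you order the argument differently: you try to prove (ii) first via a global descent lemma, and then deduce (iii) and (i) from $\sum_i|\ell'(q_i(\w_t))|\le\lnorm{\w_{RM}}\,\lnorm{\nabla\mathcal L_{\bm\epsilon}(\w_t)}$. The paper instead studies the scalar sequence $s_t=\frac{1}{\eta}\w_{RM}^T\w_t$, shows it is strictly increasing from the very same inequality (namely $s_{t+1}-s_t\ge-\sum_i\ell'(q_i(\w_t))>0$), and splits into two cases: if $s_t$ converges, then $\ell'(q_i(\w_t))\to0$ follows directly from $s_{t+1}-s_t\to 0$, giving (iii), then (i), then (ii) from the gradient formula, with no smoothness argument at all; if $s_t\to\infty$, then $\lnorm{\w_t}\to\infty$ is already established, and only then is the gradient-summability lemma (Lemma~\ref{lem:bd_grad_sum}) invoked, on the tail of the trajectory where the Hessian bound of Corollary~\ref{cor1} is available.

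This difference matters, because the obstacle you flag at the end is a genuine gap in your route, and the fix you sketch does not close it. The descent inequality with modulus $\beta\big(\sigma_{\max}(\mathbf X)+\lnorm{\bm\epsilon}\big)^2$ fails near the origin: the Hessian of $\mathcal L_{\bm\epsilon}$ contains the positive semidefinite term $\sum_i\big(-\ell'(q_i(\w))\big)\frac{\epsilon_i}{\lnorm{\w}}\big(\Id-\frac{\w\w^T}{\lnorm{\w}^2}\big)$, which is unbounded as $\w\to\mathbf 0$. Restricting to a sublevel set excluding a ball around the origin (which additionally requires assuming $\mathcal L_{\bm\epsilon}(\w_0)<n\,\ell(0)$, a hypothesis not present in the lemma) only yields smoothness with some strictly larger constant $L'>\beta\big(\sigma_{\max}(\mathbf X)+\lnorm{\bm\epsilon}\big)^2$ on that set; hence the stated step size $\eta<2\beta^{-1}\big(\sigma_{\max}(\mathbf X)+\lnorm{\bm\epsilon}\big)^{-2}$ no longer guarantees descent in the early iterations, and the induction you describe (descent keeps the iterates in the sublevel set, the sublevel set gives smoothness, smoothness gives descent) cannot be started under the lemma's hypotheses. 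The paper's two-case analysis of $s_t$ is precisely the device that removes this circularity: divergence of $\lnorm{\w_t}$ (or, in Case~1, of the margins $q_i(\w_t)$) is obtained from the monotone increase of $\w_{RM}^T\w_t$ alone, before any smoothness of $\mathcal L_{\bm\epsilon}$ is used. Your subsequent deductions --- (iii) from (ii) via the lower bound $\langle\nabla q_i(\w),\w_{RM}\rangle\ge1$, and (i) from (iii) via $\lnorm{\x_i}>\epsilon_i$ --- are correct (the latter is in fact more explicit than the paper's one-line claim) and would survive intact if you replaced the descent-lemma step with the paper's monotone-sequence argument.
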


The proof of this lemma is provided in Section~\ref{sec:proof_lemma}. Lemma~\ref{lem:lim_w} provides useful insights on the behavior of the gradient descent iterates. With small enough step size, as $t$ grows the norm of $\w_t$ becomes unbounded while making $\mathcal L(\w_t)$ closer to zero. Since $\w_t$ diverges, we focus our attention on its direction, i.e., the normalized vector $\frac{\w_t}{\lnorm{\w_t}}$. In fact, the classifier defined by $\w_t$, $C_{\w_t}(\cdot)$, only depends on its direction. Therefore, if $\frac{\w_t}{\lnorm{\w_t}}$ converges, we can claim that the classifiers generated by GD iterates converge. \\
Our main result in Theorem~\ref{thm:main} states that the classifiers generated form the GD iterates converges to the RM classifier defined in Section~\ref{sec:Robust_max_margin}. Before stating this result, we need the following definition which is a modified version of an assumption in~\cite{soudry2018implicit}.
\begin{defn}
A function $f(u)$ has a tight exponential tail if there exist positive constants $a,c, \tau,\mu$ such that for all $u>\tau$:
\begin{equation}
\label{eq:def_exp_tight}
    \begin{cases}
    f(u)\leq c\big(1+\exp(-\mu\cdot u)\big)\exp(-a\cdot u),~\text{and},\\
    f(u)\geq c\big(1-\exp(-\mu\cdot u)\big)\exp(-a\cdot u).
    \end{cases}
\end{equation}
\end{defn}
\begin{theorem}
\label{thm:main}
Let Assumption~\ref{ass_1} holds and $-\ell'(\cdot)$ has a exponential tail. Consider the gradient descent iterates in~\eqref{eq:GD_iterates} with $\eta<2\cdot \beta^{-1}\cdot(\sigma_{\max}(\mathbf X)+\lnorm{\bm \epsilon})^{-2}$. Then, for almost every dataset we have,
\begin{equation}
    \lim_{t\rightarrow \infty}\lnorm{\frac{\w_t}{\lnorm{\w_t}} - \frac{\w_{RM}}{\lnorm{\w_{RM}}}} = 0.
\end{equation}
Threfore, the resulting classifier converges to the RM classifier.
\end{theorem}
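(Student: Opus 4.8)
\emph{Plan.} The first move is to make the robust loss and its gradient explicit. Because $\ell$ is decreasing and $y_i^2=1$, for every $\w\neq\mathbf{0}_p$ the inner maximization in~\eqref{eq:robust_loss} is attained uniquely at $\mathbf{z}_i^\star=-\epsilon_i y_i\w/\lnorm{\w}$, so with $q_i(\w):=y_i\x_i^T\w-\epsilon_i\lnorm{\w}$ and the \emph{effective data vectors} $\tilde\x_i(\w):=y_i\x_i-\epsilon_i\w/\lnorm{\w}$ (note $q_i(\w)=\tilde\x_i(\w)^T\w$) one gets $\mathcal{L}_{\bm\epsilon}(\w)=\sum_i\ell(q_i(\w))$ and, by Danskin's theorem, $\nabla\mathcal{L}_{\bm\epsilon}(\w)=\sum_i\ell'(q_i(\w))\,\tilde\x_i(\w)$. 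In this notation~\eqref{eq:robust_max-margin} is $\min\lnorm{\w}$ subject to $\tilde\x_i(\w)^T\w\ge1$; its feasible set is convex (each $q_i$ is concave) and excludes the origin, so $\w_{RM}^{(\bm\epsilon)}$ is the unique minimizer of $\tfrac12\lnorm{\w}^2$ over it, and writing the KKT conditions I would extract multipliers $\lambda_i\ge0$ supported on the active set $\mathcal{SV}:=\{i:q_i(\w_{RM}^{(\bm\epsilon)})=1\}$ with $\w_{RM}^{(\bm\epsilon)}=\sum_{i\in\mathcal{SV}}\lambda_i\,\tilde\x_i^\star$, where $\tilde\x_i^\star:=\tilde\x_i(\w_{RM}^{(\bm\epsilon)})$. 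This cone/fixed-point identity is exactly the structure that powers the implicit-bias analysis of~\cite{soudry2018implicit}.

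With this in hand, I would transcribe the argument of~\cite{soudry2018implicit}, with the fixed data $y_i\x_i$ replaced by the iterate-dependent $\tilde\x_i(\w_t)$. Lemma~\ref{lem:lim_w} already supplies $\lnorm{\w_t}\to\infty$, $\nabla\mathcal{L}_{\bm\epsilon}(\w_t)\to\mathbf 0_p$, and $q_i(\w_t)\to+\infty$ for all $i$; combined with the tight exponential tail of $-\ell'$ this gives $-\ell'(q_i(\w_t))=\Theta(e^{-a\,q_i(\w_t)})$, and once $\w_t/\lnorm{\w_t}$ is close to $\w_{RM}^{(\bm\epsilon)}/\lnorm{\w_{RM}^{(\bm\epsilon)}}$ the indices $i\notin\mathcal{SV}$ satisfy $q_i(\w_t)\ge(1+\delta)q_j(\w_t)$ for $j\in\mathcal{SV}$ and a fixed $\delta>0$, so their contribution to the gradient is negligible next to that of the support vectors. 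I would then posit the ansatz $\w_t=a^{-1}\log(t)\,\w_{RM}^{(\bm\epsilon)}+\brho_t$ and run a bootstrap on $t$: summing the update, $\w_t-\w_0=-\eta\sum_{s<t}\nabla\mathcal{L}_{\bm\epsilon}(\w_s)$, the support-vector part is $\eta\big(\sum_{s<t}\Theta(1/s)\big)\sum_{i\in\mathcal{SV}}\lambda_i(s)\,\tilde\x_i^\star\sim\log(t)\,\w_{RM}^{(\bm\epsilon)}$ by the KKT identity, where the weights $\lambda_i(s)\propto e^{-a\,\tilde\x_i^{\star T}\brho_s}$ are pinned down by a contraction once $\{\tilde\x_i^\star\}_{i\in\mathcal{SV}}$ are linearly independent; this forces $\brho_t=o(\log t)$ (indeed $\brho_t$ converges, after removing a further $O(\log\log t)$ term, exactly as in~\cite{soudry2018implicit}), and dividing by $\lnorm{\w_t}$ yields $\w_t/\lnorm{\w_t}\to\w_{RM}^{(\bm\epsilon)}/\lnorm{\w_{RM}^{(\bm\epsilon)}}$. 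The "almost every dataset" proviso enters only to ensure linear independence of the effective support vectors (hence uniqueness of the multipliers and the contraction) and that the active set of $\w_t$ eventually coincides with $\mathcal{SV}$.

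The hard part will be that, unlike in~\cite{soudry2018implicit}, the effective data are not frozen: $\tilde\x_i(\w_t)-\tilde\x_i^\star=-\epsilon_i\big(\w_t/\lnorm{\w_t}-\w_{RM}^{(\bm\epsilon)}/\lnorm{\w_{RM}^{(\bm\epsilon)}}\big)$, and at finite $t$ the active set of $\w_t$ may differ from $\mathcal{SV}$. I expect the main work to be showing that this drift is governed by precisely the quantity $\lnorm{\w_t/\lnorm{\w_t}-\w_{RM}^{(\bm\epsilon)}/\lnorm{\w_{RM}^{(\bm\epsilon)}}}$ that the bootstrap is driving to zero, so that the error it injects into the summed update is higher-order and the induction closes, and simultaneously that the active set of $\w_t$ stabilizes to $\mathcal{SV}$; handling this coupling between the moving effective data, the moving active set, and the residual $\brho_t$ is the crux. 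A minor preliminary point is that $\mathcal{L}_{\bm\epsilon}$ is differentiable only away from $\w=\mathbf 0_p$ and that the descent lemma needs the smoothness of $\mathcal{L}_{\bm\epsilon}$ (controlled by $\beta$ and $\sigma_{\max}(\mathbf X)+\lnorm{\bm\epsilon}$, whence the stated step-size bound); both are handled already in the proof of Lemma~\ref{lem:lim_w}, which also guarantees the iterates stay bounded away from the origin for large $t$.
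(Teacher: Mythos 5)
Your plan is essentially the paper's own proof: the explicit form $\mathcal L_{\bm\epsilon}(\w)=\sum_i\ell(y_i\x_i^T\w-\epsilon_i\lnorm{\w})$, the KKT decomposition $\w_{RM}=\sum_{i\in\mathcal S}\alpha_i(y_i\x_i-\epsilon_i\hat\w)$ over the active set, the ansatz $\w_t=\frac{1}{a}\log(t)\,\w_{RM}+(\text{bounded residual})$, the exponential tail suppressing non-support indices, and the genericity assumption for linear independence of the effective support vectors all appear verbatim in Section~\ref{sec:proof_main}, which controls the residual by telescoping $\lnorm{\mathbf r_{t+1}}^2-\lnorm{\mathbf r_t}^2$ rather than by your bootstrap on the summed update --- a cosmetic difference within the same Soudry-et-al.\ machinery. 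The coupling you flag as the crux (the drift of the effective data vectors $\tilde\x_i(\w_t)$ and of the active set) is exactly what the paper defers to Lemma~\ref{lem:bd_inner_prod}, which is stated there without proof, so neither your sketch nor the paper's argument actually discharges that step.
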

\begin{rem}
The assumption on $-\ell'(\cdot)$ having a tight exponential tail holds for common loss functions in binary classification. As an example, the derivative of the logistic function satisfies~\eqref{eq:def_exp_tight} with $a=c=\mu = 1$. 
\end{rem}
\begin{rem}
Theorem~\ref{thm:main} states that while $\w_t$ diverges as $t$ grows , its direction converges to the direction of the robust max-margin classifier. We should note that this convergence is quite slow. Figure~\ref{fig:2} depicts the convergence of the direction of GD iterates to the RM classifier as $t\rightarrow \infty$ where it can be observed the convergence becomes slow as $t$ grows (the horizontal axis has a logarithmic scale.) In our proof in Section~\ref{sec:proof_main} we theoretically stablish that the rate of convergence is logarithmic.
\end{rem}
\begin{figure}[thpb]
      \centering
      \includegraphics[scale=0.2]{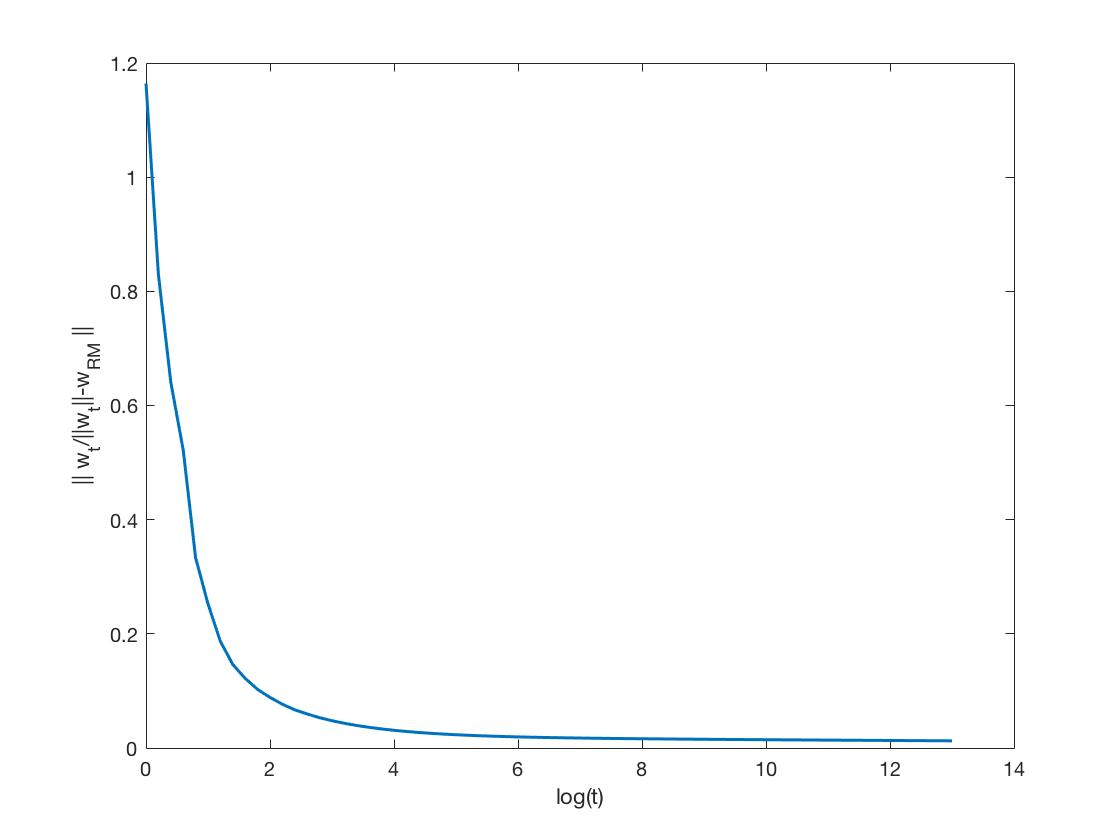}
      \caption{Convergence of GD iterates to the RM classifier. For our experiment we have $n=30$, $p=10$, number of iterations is $10^{13}$, and $\epsilon_i\sim \text{Unif}(0,\frac{1}{\lnorm{\w_M}})$. The distance between the max-margin and the RM classifier is $\lnorm{\frac{\w_M}{\lnorm{\w_M}}-\frac{\w_{RM}}{\lnorm{\w_{RM}}}} = 0.2192$. }
      \label{fig:2}
\end{figure}

\section{Proof of Lemma~\ref{lem:lim_w}}
\label{sec:proof_lemma}
In our proof we use the following lemma which characterizes the behavior of gradient descent iterates on smooth functions.
\begin{lem}[Lemma 10 in \cite{soudry2018implicit}]
\label{lem:bd_grad_sum}
Let $\mathcal L(\w)$ be a $\gamma$-smooth non-negative objective. If $\eta < \frac{2}{\gamma}$, then, for any starting point $\w(0)$, with
the GD sequence
$$\w (t + 1) = \w (t) -  \eta \nabla \mathcal L(\w(t))$$ 
we have that:
$$\sum_{u=0}^{\infty}\lnorm{\nabla \mathcal L(\w(u))}^2<+\infty.$$
\end{lem}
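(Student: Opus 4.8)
The plan is to exploit the standard consequence of $\gamma$-smoothness, the so-called descent lemma, and then to telescope. First I would invoke the quadratic upper bound that $\gamma$-smoothness furnishes: for any two points $\u,\v\in\R^p$,
\[
\mathcal L(\v) \leq \mathcal L(\u) + \nabla\mathcal L(\u)^T(\v-\u) + \frac{\gamma}{2}\lnorm{\v-\u}^2 ,
\]
which follows by writing $\mathcal L(\v)-\mathcal L(\u)$ as the integral of $\nabla\mathcal L$ along the segment joining $\u$ and $\v$ and using that $\nabla\mathcal L$ is $\gamma$-Lipschitz. This is the only place the smoothness hypothesis is consumed.

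Next I would specialize this bound to consecutive GD iterates by setting $\u=\w(t)$ and $\v=\w(t+1)=\w(t)-\eta\nabla\mathcal L(\w(t))$. Substituting the displacement $\v-\u=-\eta\nabla\mathcal L(\w(t))$ collapses the two right-hand terms into a single multiple of $\lnorm{\nabla\mathcal L(\w(t))}^2$, giving the per-step decrease
\[
\mathcal L(\w(t+1)) \leq \mathcal L(\w(t)) - \eta\Big(1-\frac{\gamma\eta}{2}\Big)\lnorm{\nabla\mathcal L(\w(t))}^2 .
\]
The hypothesis $\eta<2/\gamma$ is exactly what forces the coefficient $\delta := \eta\big(1-\tfrac{\gamma\eta}{2}\big)$ to be strictly positive, so that every step genuinely decreases the objective by an amount proportional to the squared gradient norm.

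The final step is to rearrange and sum. Writing $\delta\,\lnorm{\nabla\mathcal L(\w(u))}^2 \leq \mathcal L(\w(u)) - \mathcal L(\w(u+1))$ and adding these inequalities over $u=0,\ldots,T$ produces a telescoping right-hand side bounded by $\mathcal L(\w(0)) - \mathcal L(\w(T+1))$. Here the non-negativity of $\mathcal L$ enters decisively: since $\mathcal L(\w(T+1))\geq 0$, the partial sums obey $\sum_{u=0}^{T}\lnorm{\nabla\mathcal L(\w(u))}^2 \leq \mathcal L(\w(0))/\delta$ uniformly in $T$. As the partial sums are nondecreasing and bounded above, they converge, which is precisely the claimed finiteness of $\sum_{u=0}^{\infty}\lnorm{\nabla\mathcal L(\w(u))}^2$.

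I do not expect a genuine obstacle here, since each step is elementary; the only points demanding care are the clean derivation of the descent inequality from $\gamma$-smoothness and the verification that $\delta>0$ under the stated step-size bound. These two ingredients are where the smoothness assumption and the restriction $\eta<2/\gamma$ are each used in an essential way, and together they reduce the entire statement to a telescoping argument closed off by the non-negativity of the objective.
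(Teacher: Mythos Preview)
Your argument is correct and is precisely the standard descent-lemma-plus-telescoping proof of this fact. Note that the paper does not supply its own proof of this lemma: it is quoted verbatim as Lemma~10 of \cite{soudry2018implicit} and used as a black box. The proof you outline is exactly the one given in that reference, so there is nothing further to compare.
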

\vspace{-0.5em}
We also use the following corollary.
\begin{cor}
\label{cor1}
For any positive constant $C<{\beta \big(\sigma_{max}(\mathbf X)+\lnorm{\bm \epsilon}\big)^2}$, there exist $R>0$, such that $\lnorm{\nabla^2\mathcal L_{\bm\epsilon}(\w)}<C$ when $\lnorm{\w}>R$.
\end{cor}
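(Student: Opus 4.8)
The plan is to first remove the inner maximization in~\eqref{eq:robust_loss}. Since $\ell$ is decreasing, for each $i$ the adversary minimizes the argument $y_i(\x_i+\mathbf z_i)^T\w$, and by Cauchy--Schwarz the minimum over $\lnorm{\mathbf z_i}\le\epsilon_i$ is attained at $\mathbf z_i=-y_i\epsilon_i\,\w/\lnorm{\w}$ and equals $g_i(\w):=y_i\x_i^T\w-\epsilon_i\lnorm{\w}$. Hence $\mathcal L_{\bm\epsilon}(\w)=\sum_{i=1}^{n}\ell\big(g_i(\w)\big)$, and everything reduces to differentiating a finite sum of compositions. This bookkeeping is the routine part; the content is in the spectral estimates that follow.

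Next I would compute the Hessian by the chain rule. For $\w\neq\mathbf 0_p$ one has $\nabla g_i(\w)=y_i\x_i-\epsilon_i\,\hat\w$ and $\nabla^2 g_i(\w)=-\tfrac{\epsilon_i}{\lnorm{\w}}\big(\Id-\hat\w\hat\w^T\big)$, where $\hat\w:=\w/\lnorm{\w}$, so that
\begin{equation}
\nabla^2\mathcal L_{\bm\epsilon}(\w)=\sum_{i=1}^{n}\ell''(g_i)\,\nabla g_i\,\nabla g_i^T\;+\;\sum_{i=1}^{n}\ell'(g_i)\,\nabla^2 g_i .
\end{equation}
I would bound the two sums separately in spectral norm. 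For the first, collect the rows $\nabla g_i^T$ into $A(\w)=\mathrm{diag}(\y)\mathbf X-\bm\epsilon\,\hat\w^T$ and write the sum as $A(\w)^T D\,A(\w)$ with $D=\mathrm{diag}\big(\ell''(g_1),\dots,\ell''(g_n)\big)$. Because $\mathrm{diag}(\y)$ is orthogonal, $\sigma_{\max}(\mathrm{diag}(\y)\mathbf X)=\sigma_{\max}(\mathbf X)$, while the rank-one term $\bm\epsilon\,\hat\w^T$ has spectral norm $\lnorm{\bm\epsilon}$; the triangle inequality gives $\sigma_{\max}(A(\w))\le\sigma_{\max}(\mathbf X)+\lnorm{\bm\epsilon}$, whence
\begin{equation}
\lnorm{\textstyle\sum_i\ell''(g_i)\,\nabla g_i\,\nabla g_i^T}\le\big(\max_i\ell''(g_i)\big)\big(\sigma_{\max}(\mathbf X)+\lnorm{\bm\epsilon}\big)^2 .
\end{equation}

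For the second sum, $\lnorm{\Id-\hat\w\hat\w^T}=1$ yields $\lnorm{\sum_i\ell'(g_i)\nabla^2 g_i}\le\tfrac{1}{\lnorm{\w}}\sum_i|\ell'(g_i)|\,\epsilon_i$, which, using that $-\ell'$ is bounded for the losses at hand, is $O(1/\lnorm{\w})$ and hence vanishes as $\lnorm{\w}\to\infty$. The two estimates together show that the curvature is governed, up to a term of order $1/\lnorm{\w}$, by the coefficient $\max_i\ell''(g_i)$ multiplying $\big(\sigma_{\max}(\mathbf X)+\lnorm{\bm\epsilon}\big)^2$.

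The main obstacle is precisely this coefficient. $\beta$-smoothness alone gives only $\ell''\le\beta$, which reproduces the threshold $\beta\big(\sigma_{\max}(\mathbf X)+\lnorm{\bm\epsilon}\big)^2$ but does not push the bound strictly below an arbitrary $C$ smaller than it. To finish I would argue that $\max_i\ell''(g_i)$ itself becomes small for large $\lnorm{\w}$: writing $g_i(\w)=\lnorm{\w}\big(y_i\x_i^T\hat\w-\epsilon_i\big)$, each $|g_i|$ diverges as $\lnorm{\w}\to\infty$ whenever $y_i\x_i^T\hat\w\neq\epsilon_i$, and then the decay of $\ell''$ at infinity (a consequence of the monotonicity, nonnegativity and smoothness of $\ell$, made explicit under the tight exponential-tail hypothesis on $-\ell'$) forces $\ell''(g_i)\to 0$. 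The delicate directions are those along which some effective margin $y_i\x_i^T\hat\w-\epsilon_i$ stays at zero, keeping a coefficient bounded away from $0$; controlling the Hessian there is the crux of the argument, and is exactly the regime where the generic position of the data is used --- this is also the regime relevant to the iterates themselves, since Lemma~\ref{lem:lim_w}(iii) guarantees all the effective margins $g_i(\w_t)$ diverge. Combining the vanishing of both contributions then delivers, for any $C<\beta\big(\sigma_{\max}(\mathbf X)+\lnorm{\bm\epsilon}\big)^2$, a radius $R$ beyond which $\lnorm{\nabla^2\mathcal L_{\bm\epsilon}(\w)}<C$.
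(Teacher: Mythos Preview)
Your Hessian computation and the two–term decomposition are exactly what the paper has in mind; its own ``proof'' is the single sentence ``compute the Hessian and use that $\ell$ is twice-differentiable and $\beta$-smooth,'' and your bounds $\sigma_{\max}(A(\w))\le\sigma_{\max}(\mathbf X)+\lnorm{\bm\epsilon}$ and $\lnorm{\sum_i\ell'(g_i)\nabla^2 g_i}=O(1/\lnorm{\w})$ are the natural implementation of that sentence. So at the level of approach you are doing precisely what the paper does, only in far more detail.

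Where you go beyond the paper is in noticing, correctly, that these estimates deliver only
\[
\lnorm{\nabla^2\mathcal L_{\bm\epsilon}(\w)}\;\le\;\beta\big(\sigma_{\max}(\mathbf X)+\lnorm{\bm\epsilon}\big)^2+O(1/\lnorm{\w}),
\]
which under Assumption~\ref{ass_1} alone does not force $\lnorm{\nabla^2\mathcal L_{\bm\epsilon}(\w)}<C$ for an \emph{arbitrary} $C$ strictly below the threshold. The paper's one-liner simply glosses over this point. Your attempted repair, however, does not close the gap. First, appealing to Lemma~\ref{lem:lim_w}(iii) is circular: the proof of Lemma~\ref{lem:lim_w} (Case~2) invokes Corollary~\ref{cor1}. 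Second, the decay $\ell''(u)\to 0$ as $|u|\to\infty$ is not a consequence of Assumption~\ref{ass_1}; you are tacitly importing the exponential-tail hypothesis on $-\ell'$, which the corollary does not assume. Third, even granting that decay, your own ``delicate directions'' objection (unit vectors $\hat\w$ with $y_i\x_i^T\hat\w=\epsilon_i$) remains, since the corollary is stated for \emph{all} $\w$ with $\lnorm{\w}>R$, not only along the GD trajectory.

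In summary: your approach coincides with the paper's, carried out carefully; the subtlety you flag is real and is not addressed by the paper's sketch; but your proposed fix is circular and relies on hypotheses outside the corollary's statement. Since the corollary is only ever applied to the iterates $\w_t$ (to obtain eventual $\gamma$-smoothness with $\eta<2/\gamma$), a cleaner resolution is to argue directly along the trajectory rather than uniformly over large spheres.
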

The proof is straightforward, by computing the Hessian of $\mathcal L_{\bm \epsilon}(\cdot)$ and using the fact that $\ell(\cdot)$ is twice-differentiable and  $\beta$-smooth.

Since the function $\ell(\cdot)$ is monotonically decreasing we can write,
\begin{equation}
    \mathcal L_{\bm \epsilon}(\w) = \sum_{i=1}^{n}\ell(y_i\x_i^T\w - \epsilon_i\lnorm{\w})
\end{equation}
The gradient of the loss function can be computed as,
\begin{equation}
    \label{eq:der_loss}
    \nabla \mathcal L_{\bm \epsilon}(\w) = \sum_{i=1}^{n}\ell'(y_i\x_i^T\w - \epsilon_i \lnorm{\w})(y_i\x_i - \epsilon_i\frac{\w}{\lnorm{\w}}).
\end{equation}

Consider the sequence $s_t:= \frac{1}{\eta}\w_{RM}^T\w_t$, for $t\in \mathbb N$. First, we show that this sequence is increasing. 
\begin{align}
\label{eq:seq_increase}
&&s_t-s_{t+1} &= \w_{RM}^T\nabla \mathcal L_{\bm \epsilon}(\w_t)\\
&&=\sum_{i=1}^n\ell'&(y_i\x_i^T\w_t - \epsilon_i\lnorm{\w_t})\w_{RM}^T\big(y_i\x_i - \epsilon_i\frac{\w_t}{\lnorm{\w_t}}\big)\nonumber\\
&&\leq \sum_{i=1}^n\ell'&(y_i\x_i^T\w_t - \epsilon_i\lnorm{\w_t})\big(y_i\x_i^T\w_{RM} - \epsilon_i\lnorm{\w_{RM}}\big)\nonumber\\
&&\leq \sum_{i=1}^n\ell'&(y_i\x_i^T\w_t - \epsilon_i\lnorm{\w_t}) <0~,\nonumber
\end{align}
where for the first inequality we used the fact that $\ell'(u)<0$ and Cauchy-Schwartz, and for the second inequality we used the constraints of the optimization~\eqref{eq:robust_max-margin}. \\
Since $\{s_t\}_{t\geq 0}$ is an increasing sequence in $\R$ it either grows to $+\infty$ or approaches a limit value. We analyze each of these cases separately.\\
{\boxed{\text{Case 1}: \lim_{t\rightarrow \infty} s_t = L<+\infty }}\\
When the sequence has a limit, we have $\lim_{t\rightarrow \infty} {s_t-s_{t+1}} = 0$. From the last inequality in~\eqref{eq:seq_increase}, this implies that as $t\rightarrow \infty$,
\begin{equation}
    \ell'(y_i\x_i^T\w_t - \epsilon_i\lnorm{\w_t})\rightarrow 0,~\text{for } 1\leq i\leq n.
\end{equation}
Since $\ell'(u)$ is negative for $u\in\R$, we must have 
\begin{equation}
    \label{eq:case1}
    y_i\x_i^T\w_t - \epsilon_i\lnorm{\w_t}\rightarrow +\infty,~\text{for } 1\leq i\leq n,
\end{equation}
which is (iii). This also implies that $\lnorm{\w_t} \rightarrow \infty$.  Finally, from \eqref{eq:der_loss} we have that $\nabla \mathcal L_{\bm \epsilon}(\w_t)\rightarrow \mathbf 0_p$.
{\boxed{\text{Case 2}: ~\lim_{t\rightarrow \infty} s_t = +\infty}}\\
$\lnorm{\w_t}\geq \frac{\eta s_t}{\lnorm{\w_{RM}}}$ implies that $\lim_{t\rightarrow \infty}{\lnorm{\w_t}}=+\infty$. Using Corollary~\ref{cor1}, for any constant $C<\beta \big(\sigma_{max}(\mathbf X)+\lnorm{\bm\epsilon}\big)^2$, there exists a nonnegative integer $t_0$ such that the second derivative is bounded by $C$ for any $t>t_0$. Hence, we can use the result of Lemma~\ref{lem:bd_grad_sum} with $\eta<2\cdot \beta^{-1}\cdot(\sigma_{\max}(\mathbf X)+\lnorm{\bm \epsilon})^{-2}$ which gives $\lnorm{\nabla \mathcal L_{\bm\epsilon}(\w_t)}\rightarrow 0$ as $t\rightarrow +\infty$. \\
In order to show (iii), we use the last inequality in~\eqref{eq:seq_increase}, as $t\rightarrow \infty$ since $\w_{RM}^T\nabla \mathcal L_{\bm \epsilon}(\w_t)\rightarrow 0$, we have:
\begin{equation}
    \ell'(y_i\x_i^T\w_t - \epsilon_i\lnorm{\w_t})\rightarrow 0,~\text{for } 1\leq i\leq n,
\end{equation}
which gives the desired result.
\section{Proof of Theorem~\ref{thm:main}}
\label{sec:proof_main}
 For the RM classifier, we define the set of support vectors as:
\begin{equation}
    \label{eq:supp_vec}
    \mathcal S=\mathcal S_{RM} := \{i\in [n]:y_i\x_i^T\w_{RM} = 1+\epsilon_i\lnorm{\w_{RM}}\},
\end{equation}
First, we consider the KKT conditions for the optimization~\eqref{eq:robust_max-margin} which gives:
\begin{equation}
    \w_{RM} = \sum_{i\in \mathcal S} \alpha_i \big(y_i\x_i - \epsilon_i \hat \w\big), 
\end{equation}
where $\hat\w := \frac{\w_{RM}}{\lnorm{\w_{RM}}}$ and $\alpha_i\geq 0$. It can be shown that when the data points are drawn from a continuous distribution, for almost every dataset the support vectors are linearly independent and $\alpha_i$'s are all positive~(see also~\cite{ji2018risk} and Appendix B in~\cite{soudry2018implicit}). Given the fact that $-\ell'(\u)$ has a exponential tail, we assume $\alpha,\gamma,\tau, \mu$ are positive constants such that:
\begin{equation}
    \begin{cases}
    -\ell'(u)\leq \gamma \big(1+\exp(-\mu\cdot u)\big)\exp(-\alpha\cdot u),~\text{and},\\
    -\ell'(u)\geq\gamma \big(1-\exp(-\mu\cdot u)\big)\exp(-\alpha\cdot u),
    \end{cases}
\end{equation}
for every $u\geq \tau$.

We define a vector $\tilde \w$ such that:
\begin{equation}
    \exp\big(\tilde \w^T (y_i\x_i - \epsilon_i \hat \w)\big):=\frac{\alpha_i}{\gamma\cdot \eta}~,~~\text{for }i=1,2,\ldots,n.
\end{equation}
Recall that the gradient descent iterates are defined as,
\begin{equation}
    \label{eq:pf_GD_iterates}
    \w_{t+1} - \w_t = -\eta \nabla \mathcal L\big(\w_t\big),~t\in \mathbb N.
\end{equation}
Next, for $t\geq 0$ we define the residual vector $\mathbf r_t\in \R^p$.
\begin{equation}
\label{eq:res_def}
\mathbf r_t: = \w_t - \frac{1}{\alpha}\log(t)\w_{RM} - \tilde \w.
\end{equation}
In our proof, we adopt a similar strategy as~\cite{soudry2018implicit} and bound the norm of the residual vector $\lnorm{\mathbf r(t)}$ by a constant $C$ for every $t\geq 1$. Consider the following equation,
\begin{equation}
    \label{eq:residual_difference}
    \lnorm{\mathbf r_{t+1}}^2 - \lnorm{\mathbf r_{t}}^2 = \lnorm{\mathbf r_{t+1} - \mathbf r_t}^2 + 2~\mathbf r_t^T\big(\mathbf r_{t+1}-\mathbf r_{t}\big). 
\end{equation}
We bound each of the two terms in the RHS of~\eqref{eq:residual_difference}.
We start with bounding the first term in the~\eqref{eq:residual_difference}. We have:
\begin{equation}
    \label{eq:proof_eq_1}
    \begin{aligned}
    &&\lnorm{\mathbf r_{t+1} - \mathbf r_t}^2&= \lnorm{\mathbf w_{t+1} - \mathbf w_t - \w_{RM} \big(\log(\frac{t+1}{t})/\alpha\big)}^2 \\
    &&&\leq \eta^2\lnorm{\nabla \mathcal L(\w_t)}^2 + (\alpha t)^{-2} \lnorm{\w_{RM}}^2 \\
    &&&~~~~+ 2(\eta/\alpha)\log(1+t^{-1}){\w_{RM}}^T \nabla \mathcal L(\w_t)\\
    &&& \leq \eta^2\lnorm{\nabla \mathcal L(\w_t)}^2 + (\alpha t)^{-2} \lnorm{\w_{RM}}^2~.
    \end{aligned}
\end{equation}
Where in the first inequality we replaced $\w_{t+1}-\w_t$ using the gradient descent iterates~\eqref{eq:pf_GD_iterates} along with $\log(1+u)\leq u$, and in the second inequality we exploited the inequality~\eqref{eq:seq_increase} that gives ${\hat \w}^T \nabla \mathcal L(\w(t))<0$. 

Since the norm of $\w_t$ approaches infinity as $t$ grows, when $\eta<2\cdot \beta^{-1}\cdot(\sigma_{\max}(\mathbf X)+\lnorm{\bm \epsilon})^{-2}$ we can use the result of Corollary~\ref{cor1} and Lemma~\ref{lem:bd_grad_sum} to have:
\begin{equation}
    \sum_{t=0}^{\infty}\lnorm{\nabla \mathcal L(\w_t)}<C_1,
\end{equation}
for some constant $C_1>0$. Therefore, we can bound the sum over the first term in~\eqref{eq:residual_difference}.
\begin{equation}
    \label{eq:bd_first}
    \sum_{t\geq1} ||\mathbf r_{t+1}-\mathbf r_t||^2\leq \eta^2 C_1 + \alpha^{-2}\lnorm{\w_{RM}}^2 \sum_{t\geq1} t^{-2} <C_2.
\end{equation}
Next, we will bound the second term in~\eqref{eq:residual_difference}, i.e., $\mathbf r_t^T\big(\mathbf r_{t+1}-\mathbf r_t\big)$. To do so, we first define the constant $\theta$ as follows:
\begin{equation}
    \theta := \min _{i\in \mathcal S^{c}} y_i \x_i\w_{RM} - \epsilon_i\lnorm{\w_{RM}}>1,
\end{equation}
where $\mathcal S^c = [n]-\mathcal S$ indicates the indices of non-support vectors. 
The following lemma provides an upper bound on $\mathbf r_t^T\big(\mathbf r_{t+1}-\mathbf r_t\big)$ for $t\geq1$.
\begin{lem}
\label{lem:bd_inner_prod}
With the assumptions of Theorem~\ref{thm:main}, consider the gradient descent iterates~\eqref{eq:pf_GD_iterates}, $\{\w_t\}_{t\in \mathbb N}$, and the vector $\mathbf r_t$ defined in~\eqref{eq:res_def}. Then, for constants $C\geq 0$ and $t_0\in \mathbb N$, we have:
\begin{equation}
    \mathbf r_t^T\big(\mathbf r_{t+1}-\mathbf r_t\big)\leq Ct^{-\min(\theta, 1+\frac{\mu}{2\alpha})}~,~~\forall t\geq t_0.
\end{equation}
\end{lem}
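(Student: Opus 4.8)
The plan is to follow the residual-bounding strategy of~\cite{soudry2018implicit}; the one genuinely new difficulty is controlling the nonlinearity $\epsilon_i\lnorm{\w_t}$ that sits inside every summand of the loss. First I would write out the increment of the residual: combining~\eqref{eq:res_def},~\eqref{eq:pf_GD_iterates} and~\eqref{eq:der_loss}, and abbreviating $u_i(t):=y_i\x_i^T\w_t-\epsilon_i\lnorm{\w_t}$, $\v_i(t):=y_i\x_i-\epsilon_i\w_t/\lnorm{\w_t}$ and $\tilde\v_i:=y_i\x_i-\epsilon_i\hat\w$, one has $\mathbf r_{t+1}-\mathbf r_t=\eta\sum_{i=1}^n(-\ell'(u_i(t)))\v_i(t)-\tfrac1\alpha\log(1+\tfrac1t)\w_{RM}$. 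Substituting the KKT identity $\w_{RM}=\sum_{i\in\mathcal S}\alpha_i\tilde\v_i$ and taking the inner product with $\mathbf r_t$ splits the quantity to be bounded into a sum over the support vectors $\mathcal S$ of $\big(\eta(-\ell'(u_i(t)))\v_i(t)-\tfrac{\log(1+1/t)}{\alpha}\alpha_i\tilde\v_i\big)^T\mathbf r_t$ plus a sum over $i\notin\mathcal S$ of $\eta(-\ell'(u_i(t)))\v_i(t)^T\mathbf r_t$.

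The non-support sum is the easy part. By Lemma~\ref{lem:lim_w} the iterates eventually have $u_i(t)>\tau$, so the exponential-tail bound on $-\ell'$ applies; and since $\lnorm{\w_t}\to\infty$ with leading behaviour $\w_t=\tfrac{\log t}{\alpha}\w_{RM}+O(1)$, the definition of $\theta$ gives $u_i(t)\geq\tfrac{\theta}{\alpha}\log t-O(1)$ for $i\notin\mathcal S$, hence $-\ell'(u_i(t))=O(t^{-\theta})$. Since $\lnorm{\v_i(t)}\leq\lnorm{\x_i}+\epsilon_i$ and $\lnorm{\mathbf r_t}$ stays bounded (this boundedness is what the whole argument ultimately establishes, by induction, exactly as in~\cite{soudry2018implicit}), this sum is $O(t^{-\theta})$.

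The support sum is the crux. Using $\w_t=\tfrac{\log t}{\alpha}\w_{RM}+\tilde\w+\mathbf r_t$ and expanding $\lnorm{a+b}$ for $\lnorm{a}=\Theta(\log t)\to\infty$ and $\lnorm b$ bounded, one gets $\lnorm{\w_t}=\tfrac{\log t}{\alpha}\lnorm{\w_{RM}}+\hat\w^T(\tilde\w+\mathbf r_t)+O(1/\log t)$ and, for the same reason, $\hat\w-\w_t/\lnorm{\w_t}=O(1/\log t)$, i.e. $\v_i(t)=\tilde\v_i+O(1/\log t)$. Because $y_i\x_i^T\w_{RM}-\epsilon_i\lnorm{\w_{RM}}=1$ for $i\in\mathcal S$, this yields $u_i(t)=\tfrac1\alpha\log t+\tilde\v_i^T\tilde\w+\beta_i(t)+O(1/\log t)$ with $\beta_i(t):=\tilde\v_i^T\mathbf r_t$. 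Feeding this into the tight exponential tail of $-\ell'$ and into the defining relation of $\tilde\w$ (chosen precisely so that the leading $O(1/t)$ parts of $\eta(-\ell'(u_i(t)))$ and of $\tfrac{\log(1+1/t)}{\alpha}\alpha_i$ agree), the $i\in\mathcal S$ summand collapses to $\tfrac{\alpha_i}{\alpha t}(e^{-\alpha\beta_i(t)}-1)\beta_i(t)$ plus error terms. The main term is $\leq0$ by the elementary inequality $x(e^{-x}-1)\leq0$ — this is the sign cancellation that keeps $\lnorm{\mathbf r_t}$ from growing — and the error terms (the $O(1/\log t)$ corrections to $u_i$ and to $\v_i$, the second-order exponential-tail term of order $t^{-\mu/\alpha}$, and $\log(1+1/t)-1/t$), bounded via $\lnorm{\mathbf r_t}\leq C$, decay like $t^{-\min(\theta,\,1+\mu/(2\alpha))}$; the exponent $1+\mu/(2\alpha)$ is what the bookkeeping of these contributions produces. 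Combining with the non-support sum gives the stated bound.

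The part I expect to be hardest is precisely this drift of the "effective data points": in~\cite{soudry2018implicit} the vectors multiplying $-\ell'(\cdot)$ are the fixed $y_i\x_i$, whereas here they are $y_i\x_i-\epsilon_i\w_t/\lnorm{\w_t}$, which move with $t$; one must show they approach $\tilde\v_i$ fast enough ($O(1/\log t)$) for the sign structure of the feedback term to survive, and do so while only the bound $\lnorm{\mathbf r_t}\leq C$ is available from the ongoing induction — so the estimates have to be arranged so that this bootstrap closes without circularity.
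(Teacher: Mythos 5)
First, a point of comparison: the paper itself never proves Lemma~\ref{lem:bd_inner_prod} --- it is stated in Section~\ref{sec:proof_main} and immediately used, with only the remark that the strategy of~\cite{soudry2018implicit} is being adopted, so your reconstruction has to be judged on its own. Its architecture is the right one and matches the evident intent: split via the KKT representation $\w_{RM}=\sum_{i\in\mathcal S}\alpha_i\tilde\v_i$ into support and non-support contributions, kill the non-support part with the margin gap $\theta$, and for the support vectors cancel the leading $\Theta(1/t)$ part of $\eta(-\ell'(u_i(t)))$ against $\tfrac{1}{\alpha}\log(1+1/t)\,\alpha_i$ using the defining relation of $\tilde\w$, leaving the signed quantity $\tfrac{\alpha_i}{\alpha t}\big(e^{-\alpha\beta_i}-1\big)\beta_i\le 0$. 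You also correctly isolate the genuinely new difficulty relative to~\cite{soudry2018implicit}: both the effective data points $\v_i(t)=y_i\x_i-\epsilon_i\w_t/\lnorm{\w_t}$ and the arguments $u_i(t)$ drift with $t$ because of the $\epsilon_i\lnorm{\w_t}$ term.

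However, at exactly that step the proposal has a gap. The drift you establish is $\lnorm{\hat\w-\w_t/\lnorm{\w_t}}=O(1/\log t)$ and $\lnorm{\w_t}-\tfrac{\log t}{\alpha}\lnorm{\w_{RM}}-\hat\w^T(\tilde\w+\mathbf r_t)=O(1/\log t)$, and these corrections multiply quantities of size $\Theta(1/t)$, namely $\eta(-\ell'(u_i(t)))$ for $i\in\mathcal S$. The resulting error terms are therefore of order $1/(t\log t)$, with no sign control: for instance the piece $\eta(-\ell'(u_i(t)))\,\epsilon_i\big(\hat\w-\w_t/\lnorm{\w_t}\big)^T\mathbf r_t$ is governed by the component of $\tilde\w+\mathbf r_t$ orthogonal to $\w_{RM}$, which does not vanish in general. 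But $t^{-1}(\log t)^{-1}$ is not $O\big(t^{-\min(\theta,1+\mu/(2\alpha))}\big)$ for any $\theta>1$, $\mu>0$, and $\sum_t 1/(t\log t)$ diverges, so these terms are neither covered by the claimed bound nor harmless in the subsequent summation~\eqref{eq:bd_sum_inner}. Your statement that ``the bookkeeping of these contributions produces'' the exponent $1+\mu/(2\alpha)$ is justified for the second-order tail term (since $u_i(t)\ge\tfrac{\log t}{2\alpha}$ eventually gives $e^{-\mu u_i(t)}\le t^{-\mu/(2\alpha)}$) and for $\log(1+1/t)-1/t=O(t^{-2})$, but it is not justified --- and appears to be false --- for the drift terms. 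Closing the argument requires something additional: a sign argument for the drift contribution, a redefinition of $\mathbf r_t$ or $\tilde\w$ that absorbs the $O(1/\log t)$ corrections to $\lnorm{\w_t}$ and to the direction $\w_t/\lnorm{\w_t}$, or acceptance of a weaker, non-summable bound (which would still yield directional convergence by a modified argument, but not the lemma as stated).
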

Using the result of Lemma~\ref{lem:bd_inner_prod}, since $\theta>1$ and $\mu/\alpha>0$ we have:
\begin{equation}
\label{eq:bd_sum_inner}
\begin{aligned}
    &&\sum_{t\geq 0}\mathbf r_t^T\big(\mathbf r_{t+1}-\mathbf r_t\big) &< \sum_{t=1}^{t_0-1}\mathbf r_t^T\big(\mathbf r_{t+1}-\mathbf r_t\big)+ C\sum_{t\geq t_0} t^{-\min(\theta, 1+\frac{\mu}{2\alpha})} \\
    &&&<C_3.
\end{aligned}
\end{equation} 
Therefore, from~\eqref{eq:residual_difference},~\eqref{eq:bd_first}, and~\eqref{eq:bd_sum_inner}, we have,
\begin{equation}
    \lnorm{\mathbf r_k}^2  = \lnorm{\mathbf r_1}^2 + \sum_{t=1}^{k-1} \lnorm{\mathbf r_{t+1}}^2 - \lnorm{\mathbf r_{t}}^2 <C_4~,~~\forall k\geq 1. 
\end{equation}
for a positive constant $C_4$. Consequently, from~\eqref{eq:res_def} we have,
\begin{equation}
    \lnorm{\w_t-\frac{1}{\alpha} \log(t) \w_{RM}}\leq C_4 + \lnorm{\tilde \w},
\end{equation}
By some straightforward calculations we can get,
\begin{equation}
    \lnorm{\frac{\w_t}{\lnorm{\w_t}} - \frac{\w_{RM}}{\lnorm{\w_{RM}}}}^2 \leq 2 \big[\frac{\alpha( C_4+\lnorm{\tilde \w})}{\log(t)\lnorm{\w_{RM}}}\big]^2,
\end{equation}
which gives the desired result, i.e.,
\begin{equation}
    \lim_{t\rightarrow \infty}\lnorm{\frac{\w_t}{\lnorm{\w_t}} - \frac{\w_{RM}}{\lnorm{\w_{RM}}}} = 0.
\end{equation}

\bibliography{library}

\begin{thebibliography}{10}

\bibitem{biggio2013evasion}
Battista Biggio, Igino Corona, Davide Maiorca, Blaine Nelson, Nedim
  {{S}}rndi{\'c}, Pavel Laskov, Giorgio Giacinto, and Fabio Roli.
\newblock Evasion attacks against machine learning at test time.
\newblock In {\em Joint European conference on machine learning and knowledge
  discovery in databases}, pages 387--402. Springer, 2013.

\bibitem{carlini2017towards}
Nicholas Carlini and David Wagner.
\newblock Towards evaluating the robustness of neural networks.
\newblock In {\em 2017 ieee symposium on security and privacy (sp)}, pages
  39--57. IEEE, 2017.

\bibitem{cortes1995support}
Corinna Cortes and Vladimir Vapnik.
\newblock Support-vector networks.
\newblock {\em Machine learning}, 20(3):273--297, 1995.

\bibitem{deng2019model}
Zeyu Deng, Abla Kammoun, and Christos Thrampoulidis.
\newblock A model of double descent for high-dimensional binary linear
  classification.
\newblock {\em arXiv preprint arXiv:1911.05822}, 2019.

\bibitem{emami2020generalization}
Melikasadat Emami, Mojtaba Sahraee-Ardakan, Parthe Pandit, Sundeep Rangan, and
  Alyson~K Fletcher.
\newblock Generalization error of generalized linear models in high dimensions.
\newblock {\em arXiv preprint arXiv:2005.00180}, 2020.

\bibitem{javanmard2020binary}
Adel Javanmard and Mahdi Soltanolkotabi.
\newblock Precise statistical analysis of classification accuracies for
  adversarial training.
\newblock {\em arXiv preprint arXiv:2010.11213}, 2020.

\bibitem{javanmard2020precise}
Adel Javanmard, Mahdi Soltanolkotabi, and Hamed Hassani.
\newblock Precise tradeoffs in adversarial training for linear regression.
\newblock {\em arXiv preprint arXiv:2002.10477}, 2020.

\bibitem{ji2018risk}
Ziwei Ji and Matus Telgarsky.
\newblock Risk and parameter convergence of logistic regression.
\newblock {\em arXiv preprint arXiv:1803.07300}, 2018.

\bibitem{madry2017towards}
Aleksander Madry, Aleksandar Makelov, Ludwig Schmidt, Dimitris Tsipras, and
  Adrian Vladu.
\newblock Towards deep learning models resistant to adversarial attacks.
\newblock {\em arXiv preprint arXiv:1706.06083}, 2017.

\bibitem{montanari2019generalization}
Andrea Montanari, Feng Ruan, Youngtak Sohn, and Jun Yan.
\newblock The generalization error of max-margin linear classifiers:
  High-dimensional asymptotics in the overparametrized regime.
\newblock {\em arXiv preprint arXiv:1911.01544}, 2019.

\bibitem{salehi2019impact}
Fariborz Salehi, Ehsan Abbasi, and Babak Hassibi.
\newblock The impact of regularization on high-dimensional logistic regression.
\newblock In {\em Advances in Neural Information Processing Systems}, pages
  11982--11992, 2019.

\bibitem{salehi2020gmm}
Fariborz Salehi, Ehsan Abbasi, and Babak Hassibi.
\newblock The performance analysis of generalized margin maximizer (gmm) on
  separable data.
\newblock {\em International Conference on Machine Learning (ICML)}, 2020.

\bibitem{shafahi2019adversarial}
Ali Shafahi, Mahyar Najibi, Mohammad~Amin Ghiasi, Zheng Xu, John Dickerson,
  Christoph Studer, Larry~S Davis, Gavin Taylor, and Tom Goldstein.
\newblock Adversarial training for free!
\newblock In {\em Advances in Neural Information Processing Systems}, pages
  3353--3364, 2019.

\bibitem{soudry2018implicit}
Daniel Soudry, Elad Hoffer, Mor~Shpigel Nacson, Suriya Gunasekar, and Nathan
  Srebro.
\newblock The implicit bias of gradient descent on separable data.
\newblock {\em The Journal of Machine Learning Research}, 19(1):2822--2878,
  2018.

\bibitem{stojnic2013framework}
Mihailo Stojnic.
\newblock A framework to characterize performance of lasso algorithms.
\newblock {\em arXiv preprint arXiv:1303.7291}, 2013.

\bibitem{szegedy2013intriguing}
Christian Szegedy, Wojciech Zaremba, Ilya Sutskever, Joan Bruna, Dumitru Erhan,
  Ian Goodfellow, and Rob Fergus.
\newblock Intriguing properties of neural networks.
\newblock {\em arXiv preprint arXiv:1312.6199}, 2013.

\bibitem{thrampoulidis2015regularized}
Christos Thrampoulidis, Samet Oymak, and Babak Hassibi.
\newblock Regularized linear regression: A precise analysis of the estimation
  error.
\newblock In {\em Conference on Learning Theory}, pages 1683--1709, 2015.

\bibitem{vapnik1982estimation}
V~Vapnik.
\newblock Estimation of dependences based on empirical data berlin, 1982.

\bibitem{xu2017feature}
Weilin Xu, David Evans, and Yanjun Qi.
\newblock Feature squeezing: Detecting adversarial examples in deep neural
  networks.
\newblock {\em arXiv preprint arXiv:1704.01155}, 2017.

\end{thebibliography}
\bibliographystyle{plain}

\end{document}